\renewcommand\footnotetextcopyrightpermission[1]{} 
\def\BibTeX{{\rm B\kern-.05em{\sc i\kern-.025em b}\kern-.08emT\kern-.1667em\lower.7ex\hbox{E}\kern-.125emX}}
\begin{document}

%
\title{Beta Survival Models}

%

\author{David Hubbard, Beno{\^\i}t Rostykus, Yves Raimond, Tony Jebara}
\email{{dhubbard,brostykus,tjebara,yraimond}@netflix.com}
\affiliation{%
  \institution{Netflix}
  \streetaddress{100 Albright Way}
  \city{Los Gatos}
  \state{CA}
  \postcode{95032}
}




%
\renewcommand{\shortauthors}{Hubbard, Rostykus, Raimond and Jebara.}

%
\begin{abstract}
This article analyzes the problem of estimating the time until an event occurs, also known as survival modeling. We observe through substantial experiments on large real-world datasets and use-cases that populations are largely heterogeneous. Sub-populations have different mean and variance in their survival rates requiring flexible models that capture  heterogeneity. We leverage a classical extension of the logistic function into the survival setting to characterize unobserved heterogeneity using the beta distribution. This yields insights into the geometry of the problem as well as  efficient estimation methods for linear, tree and neural network models that adjust the beta distribution based on observed covariates. We also show that the additional information captured by the beta distribution leads to interesting ranking implications as we determine who is most-at-risk. We show theoretically that the ranking is variable as we forecast forward in time and prove that pairwise comparisons of survival remain transitive. Empirical results using large-scale datasets across two use-cases (online conversions and retention modeling), demonstrate the competitiveness of the method. The simplicity of the method and its ability to capture skew in the data makes it a viable alternative to standard techniques particularly when we are interested in the time to event and when the underlying probabilities are heterogeneous.
\end{abstract}

%
%
\begin{CCSXML}
<ccs2012>
<concept>
<concept_id>10002950.10003648.10003688.10003694</concept_id>
<concept_desc>Mathematics of computing~Survival analysis</concept_desc>
<concept_significance>500</concept_significance>
</concept>
<concept>
<concept_id>10010147.10010257</concept_id>
<concept_desc>Computing methodologies~Machine learning</concept_desc>
<concept_significance>500</concept_significance>
</concept>
<concept>
<concept_id>10010147.10010257.10010258.10010259.10003268</concept_id>
<concept_desc>Computing methodologies~Ranking</concept_desc>
<concept_significance>300</concept_significance>
</concept>
<concept>
<concept_id>10010147.10010257.10010293.10003660</concept_id>
<concept_desc>Computing methodologies~Classification and regression trees</concept_desc>
<concept_significance>300</concept_significance>
</concept>
<concept>
<concept_id>10010147.10010341.10010342</concept_id>
<concept_desc>Computing methodologies~Model development and analysis</concept_desc>
<concept_significance>300</concept_significance>
</concept>
</ccs2012>
\end{CCSXML}

\ccsdesc[500]{Mathematics of computing~Survival analysis}
\ccsdesc[500]{Computing methodologies~Machine learning}
\ccsdesc[300]{Computing methodologies~Ranking}
\ccsdesc[300]{Computing methodologies~Classification and regression trees}
\ccsdesc[300]{Computing methodologies~Model development and analysis}
%
\keywords{beta distribution, survival regression, ranking, nonlinear, boosting, heterogeneous}

%

%
\maketitle

\section{Introduction}

Survival modeling, customer lifetime value \cite{gupta2006} and product ranking \cite{RudinEtAl12,ChangEtAl2012} are of practical interest when we want to estimate time until a specific event occurs or rank items to estimate which will encounter the event first. Traditionally leveraged in medical applications, today survival regression is extensively used in large-scale business settings such as predicting time to conversion in online advertising and predicting retention (or churn) in subscription services.  Standard survival regression involves a maximum likelihood estimation problem over a specified continuous distribution of the time until event (exponential for the Accelerated Failure Time model \cite{kalbfleisch2011statistical}) or of the hazard function (in the case of Cox Proportional Hazards \cite{cox1972regression}).  In practice, time to event problems are often converted to classification problems by choosing a fixed time horizon which is appropriate for the application at hand.  One then has to balance training the model on recent data against a longer labeling horizon which might be more desirable. Survival models avoid this trade-off by relying on right-censoring. This maps it to missing data problem where not all events are observed due to the horizon over which the data is collected. 

There is evidence (which will be further demonstrated in this article) of the importance of heterogeneity in a variety of real-world time to event datasets. Heterogeneity indicates that items in a data-set have different survival means and variances. For instance, heterogeneity in a retention modeling context would be that as time increases the customers with the highest probability to retain are the ones which still remain in the dataset. Without considering this effect, it might appear that the baseline retention probability has increased over time when in fact the first order effect is that there is a mover/stayer bias. Thus methods which don't consider multiple decision points can fail to adequately account for this effect and thus fall victim to the so called ruse of heterogeneity \cite{vaupel1985}. 

Consider the following example inspired by Porath \cite{ben1973labor} where we have 2 groups of customers, one in which the customers have a retention probability of $0.5$ and in the other the customers are uniformly split between retention probabilities of either $1.0$ or $0.0$.  In this case after having observed only one decision point we would observe the retention probabilities of the two groups to be identical. However, if we consider multiple decision points it becomes clear that the latter population has a much higher long term retention rate because some customers therein retain to infinity.  In order to capture this unobserved heterogeneity we need a distribution that is flexible enough to capture these dynamics and ideally as simple as possible. To that end we posit a beta prior on our instantaneous event probabilities. The beta has only 2 parameters, yet is flexible enough to capture right/left skewed, U-shaped, or normal distributions.
\begin{figure}
  \includegraphics[width=\linewidth]{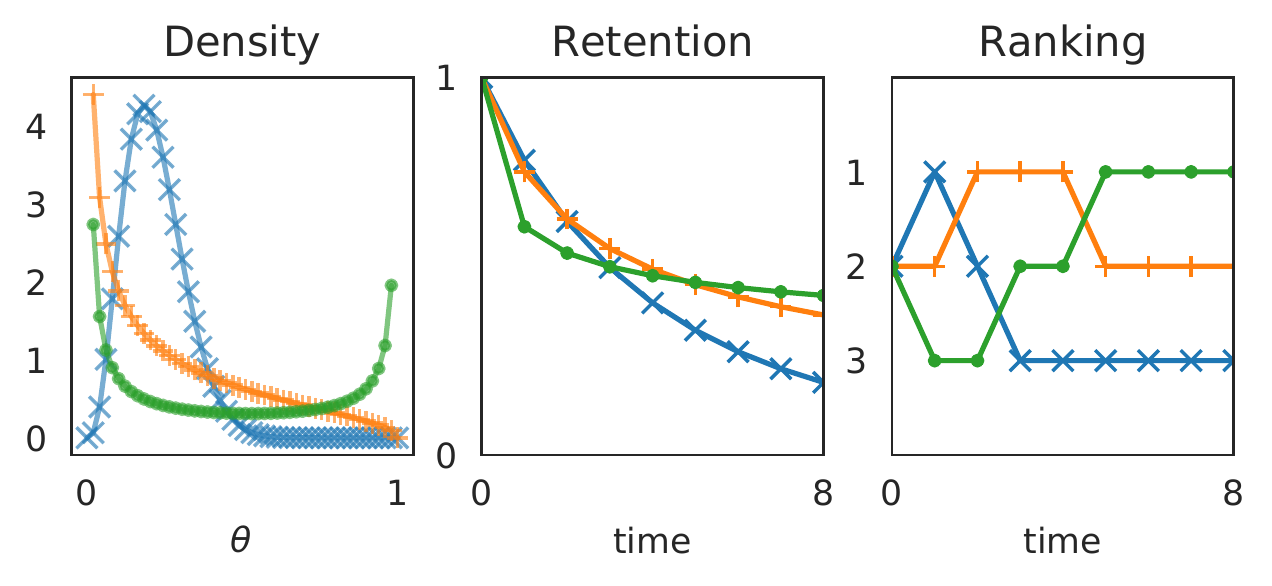}
   \caption{Heterogeneity gives rise to different survival distributions and rankings.}
  \Description{showing the way different densities evolve over time, and how in the end we can improve our rankings as a result of learning this information.}
  \label{fig:teaser_small}
\end{figure}

Consider the example in Figure~\ref{fig:teaser_small}. A data-set contains three heterogeneous items (green dots, orange plus and blue cross). These items are each characterized by beta distributions (left panel). At each time period, each item samples a Bernoulli distributed coin from its beta distribution and flips it to determine if the item will retain. In the middle panel, we see the retention of the items over time and in the right-mode panel we see the ranking of the items over time. Even though the items are sampling from fixed beta distributions, the ranking of which item is most at risk over time changes. Thus, a stationary set of beta distributions lead to non-stationary survival and rankings. Such nuance cannot be captured by summarizing each item with only a point-estimate of survival (as opposed to a 2-parameter beta distribution).

Due to the discrete and repeat nature of the decision problems over time, we leverage a geometric hypothesis to recover survival distributions. We estimate the parameters of this model via an empirical Bayes method which can be efficiently implemented through the use of analytical solutions to the underlying integrals.  This model termed the beta-logistic was first introduced by Heckman and Willis \cite{Heckman1977}, and was also studied by Fader and Hardie \cite{fader2007project}.  We find that in practice this model fits the discrete decision data quite well, and that it allows for accurate projections of future decision points. 

We extend the beta-logistic model to the case of large-scale trees or neural-network models that adjust the beta distribution given input covariates. These leverage the use of recurrence relationships to efficiently compute the gradient.  Through the beta prior underpinning the model, we show empirically that the beta-logistic is able to model a wide range of heterogeneous behaviors that other survival or binary classification models fail to capture, especially in the presence of skew.  As we will see, the beta-logistic model outperforms a typical binary logistic regression in real-world examples, and provides tighter estimated posteriors compared to a typical Laplace approximation. 

We also present theoretical results on ranking with beta distributions. We show that pairwise comparisons between beta distributions can be summarized by the median of the two distributions. This makes ranking with beta distributions a provably transitive problem (pairwise distribution comparisons are generally non-transitive \cite{Savage1994}). Therefore, using the medians of beta distributions allows us to rank the entire population to see which subscribers or which items are most-at-risk. The results are then extended to the case where we rank items across multiple time horizons by approximating the evolution of the survival distribution over time as a product of beta distributions as in \cite{fan1991distribution}. Thus we obtain a consistent ranking of items which evolves over time and is summarized by medians (rather than means) to improve the accuracy of ranking who is most-at-risk. 

This paper is organized as follows.  We first show the beta-logistic derivation as well as reference the recursion formulas which make the computation efficient.  We also make brief remarks about convexity and observe that in practice we rarely encounter convergence issues.  We then present several simulated examples to help motivate the discussion. This is followed by an empirical performance evaluation of the various models across three large real-world datasets: a sparse online conversion dataset and two proprietary datasets from a popular video streaming service involving subscription and viewing behaviors. In all the examples, the beta-logistic outperforms other baseline methods, and seems to perform better in practice regardless of how many attributes are considered. Even though there will always be unobserved variations between individuals that influence the time-to-event, this empirical evidence is consistent.

\section{The Beta-Logistic for Survival Regression}

\subsection{Model derivation}
Denote by $(x_i, t_i, c_i) \in \mathbb{R}^d \times \mathbb{N}\times \{0, 1\}$ a dataset where $x_i$ are covariates, $t_i$ is the discrete time to event for an observed (i.e. uncensored) datapoint ($c_i = 0$) and $t_i$ is the right-censoring time for a datapoint for which the event of interest hasn't happened yet ($c_i = 1)$.  A traditional survival model would posit a parametric distribution $p(T|x)$ and then try to maximize the following empirical likelihood over a class of functions $f$:
\begin{equation*}
L = \prod_{\forall i, c_i = 0} \mathbb{P}\left(T=t_i| f(x_i)\right)\prod_{\forall i, c_i = 1} \mathbb{P}\left(T > t_i | f(x_i)\right). 
\end{equation*}
Unfortunately, unless constrained to a few popular distributions such as the exponential one, the maximization of such a quantity is usually intractable for most classes of functions $f(x)$. 

Let us instead assume that at each discrete decision point, a customer decides to retain with some (point-estimate) probability $1 -\theta$ where $\theta$ is some function of the covariates $x$. Then we further assume that the instantaneous event probability at decision point $t$ is characterized by a shifted geometric distribution as follows:
\begin{equation*}
\mathbb{P}(T=t|\theta) = \theta (1-\theta)^{t-1}, ~
where ~ \theta \in \left[0,1\right].
\end{equation*}
This then gives the following survival equation:
\begin{equation}
\mathbb{P}(T > t|\theta) = 1-\sum_{i=1}^{t}\mathbb{P}(T=i|\theta).
\label{eqn:survival}
\end{equation}
This geometric assumption follows from the discrete nature of the decisions customers need to make in a subscription service, when continuing to next episodes of a show, etc. It admits a a simple and straightforward survival estimate that we can also use to project beyond our observed time horizon.  Now in order to capture the heterogeneity in the data, we can instead assume that $\theta$ follows a conditional beta prior $(\mathbb{B})$ as opposed to being a point-estimate as follows:
\begin{equation*}
f(\theta|\alpha(x),\beta(x))=\frac{\theta^{\alpha(x)-1}(1-\theta)^{\beta(x)-1}}{B(\alpha(x),\beta(x))}
\end{equation*}
where $\alpha(x)$ and $\beta(x)$ are some arbitrary positive functions of covariates (e.g. measurements that characterize a specific customer or a specific item within the population). 

Consider the Empirical Bayes method \cite{gelman2013bayesian} (also called Type-II Maximum Likelihood Estimation \cite{berger2013statistical}) as an estimator for $\alpha(x)$ and $\beta(x)$ given the data:
\begin{equation*}
    \max_{\alpha, \beta} L(\alpha, \beta)
\end{equation*}
where
\begin{equation}\label{eq:likelihood_type2}
L(\alpha, \beta) = \prod_{\forall i,c_i = 0} \mathbb{P}\left(T=t_i| \alpha(x_i),\beta(x_i)\right)\prod_{\forall i, c_i = 1} \mathbb{P}\left(T > t_i | \alpha(x_i),\beta(x_i)\right).
\end{equation}
Using the marginal likelihood function we obtain:
\begin{equation*}
\mathbb{P}(T | \alpha(x), \beta(x)) = \int_{0}^{1}f(\theta|\alpha(x),\beta(x))\mathbb{P}(T | \theta)d\theta.
\end{equation*}
As we will see in the next section, a key property of the beta-logistic model is that it makes the maximization of Equation~\ref{eq:likelihood_type2} tractable. 

Since $\alpha$ and $\beta$ have to be positive to define valid beta-distributions, we use an exponential reparameterization and aim to estimate functions $a(x)$ and $b(x)$ such that:
\begin{equation*}
\alpha(x) = e^{a(x)} ~ and  ~ \beta(x) = e^{b(x)}.
\end{equation*}
Throughout the paper, we will also assume that $a$ and $b$ are twice-differentiable. 

The name \textbf{beta-logistic} for such a model has been coined by \cite{Heckman1977} and studied when the predictors $a(x) = \gamma_a \cdot x$ and $b(x) = \gamma_b \cdot x$ are linear functions.  In this case, at $T=1$ observe that if we want to estimate the mean this reduces to an over-parameterized logistic regression:
\begin{equation*}
\mathbb{P}(T = 1 | \alpha(x), \beta(x)) = \frac{\alpha(x)}{\alpha(x)+\beta(x)} =\frac{1}{1+e^{(\gamma_b-\gamma_a)^\top x}}.
\end{equation*}

\subsection{Algorithm}
We will now consider the general case where $a(x)$ and $b(x)$ are nonlinear functions and could be defined by the last layer of a neural network. Alternatively, they could be generated by a vectored-output Gradient Boosted Regression Tree (GBRT) model.
Using properties of the beta function (see \ref{subsection:recurrence_derivation}), one can show that:
\begin{equation*}\label{eq:t_1_p_formula}
\mathbb{P}(T = 1 | \alpha, \beta) = \frac{\alpha}{\alpha+\beta}
\end{equation*}
and
\begin{equation*}\label{eq:t_1_s_formula}
\mathbb{P}(T > 1 | \alpha, \beta) = \frac{\beta}{\alpha+\beta}.
\end{equation*}
Further, the following recursion formulas hold for $t > 1$:
\begin{equation}\label{eq:recursion_prod_p}
\mathbb{P}(T = t | \alpha, \beta) = \left( \frac{\beta+t-2}{\alpha+\beta+t-1} \right)\mathbb{P}(T = t - 1 | \alpha, \beta)
\end{equation}
and
\begin{equation}\label{eq:recursion_prod_s}
\mathbb{P}(T > t | \alpha, \beta) = \left( \frac{\beta+t-1}{\alpha+\beta+t-1} \right)\mathbb{P}(T > t - 1 | \alpha, \beta).
\end{equation}

If we denote $\ell = -\log L$ as the function we wish to minimize,  Equation~\ref{eq:recursion_prod_p} and Equation~\ref{eq:recursion_prod_s} allow us to derive (see Appendix~\ref{subsection:appendix_gradient}) recurrence relationships for individual terms of $\dfrac{\partial \ell}{\partial \cdot}$ and $\dfrac{\partial^2 \ell}{\partial \cdot^2}$. This makes it possible for example to implement a custom loss gradient and Hessian callbacks in popular GBRT libraries such as XGBoost \cite{chen2016xgboost} and lightGBM \cite{ke2017lightgbm}. In this case, the GBRT models have "vector-output" and predict for every row both $a = \log(\alpha)$ and $b = \log(\beta)$ jointly from a set of covariates, similarly to how the multinomial logit loss is implemented in these libraries. More precisely, choosing a squared loss for the split criterion in each node as is customary, the model will equally weight how well the boosting residuals (gradients) with respect to $a$ and $b$ are regressed on. 

Note that because of the inherent discrete nature of the beta-logistic model, the computational complexity of evaluating its gradient on a given datapoint is proportional to the average value of $t_i$. Therefore, a reasonable time step discretization value needs to be chosen to properly capture the survival dynamics while allowing fast inference.  One can similarly implement this loss in deep learning frameworks. One would typically  explicitly pad the label vectors $t_i$ with zeros up to the max censoring horizon (which would bring average computation per row to $O(\max_i t_i)$ for the mini-batch) so that computation can be expressed through vectorized operations, or via frameworks such as Vectorflow \cite{rostykusvectorflow} or Tensorflow \cite{abadi2016tensorflow} ragged tensors that allow for variable-length inputs to bring the computational cost back to $O(\text{avg}_i t_i)$.

\subsection{Convexity}
For brevity, define $\alpha_i = \alpha(x_i)$ and $\beta_i = \beta(x_i)$.  In the special case where $a(x) = \gamma_{a}\cdot x$ and $b(x) = \gamma_{b}\cdot x$ are linear functions, their second derivative is null and the Hessian of the log-likelihood Equation~\ref{eq:likelihood_type2} is diagonal:
\begin{equation*}\label{eq:convexity_hess_a}
    \frac{\partial^2 \ell}{\partial \gamma_{a,j}^2}  = \gamma_{a,j}^2\alpha_i\sum_{i}\left[ \frac{\beta_i}{\left(\alpha_i + \beta_i\right)^2} + \sum_{u=2}^{t_i}\frac{\beta_i + u - 1}{\left(\alpha_i + \beta_i +u-1\right)^2}\right]
\end{equation*}
and
\begin{equation}\label{eq:convexity_hess_b}
    \begin{aligned}
        \frac{\partial^2 \ell}{\partial \gamma_{b,j}^2}  =  \gamma_{b,j}^2\beta_i\sum_{i}\left[ \frac{\alpha_i}{\left(\alpha_i + \beta_i\right)^2} + \sum_{u=2}^{t_i}k_i\right]
    \end{aligned}
\end{equation}
 where 
\begin{equation*}
   k_i = 
    \begin{cases}
    (\alpha_i+1)\frac{\beta_i^2 - (u-2)(\alpha_i + u - 1)}{\left(\beta_i+u-2\right)^2\left(\alpha_i+\beta_i+u-1\right)^2} & \text{if $c_i = 0$} \\
    \alpha_i\frac{\beta_i^2 - (u-1)(\alpha_i + u-1) }{\left(\beta_i+u-1\right)^2\left(\alpha_i+\beta_i+u-1\right)^2} & \text{otherwise.} 
    \end{cases}     
\end{equation*}
   

We see that the log-likelihood of the shifted-beta geometric model is always convex in $\alpha$ when $a$ is linear.  Further we can see that when all points are observed (no censoring), and the maximum horizon is $T=2$ then Equation~\ref{eq:convexity_hess_b} is also convex in $b$.  

Subsequent terms are not convex, however, but despite that in practice we do not encounter significant convexity issues (e.g. local minima and saddle points).  It seems likely that in practice the convex terms of the likelihood dominate the non-convex terms. Note once again that there is generally no global convexity of the objective function.

\section{Ranking with the Beta-Logistic}
Given $n$ beta distributions another relevant question for the business is how do we rank them from best to worst? This is crucial if, for instance, products need to be ranked to prioritize maintenance as in \cite{RudinEtAl12}. For example, we may be interested in ranking beta distributions for news articles where we are estimating the probability that an article will be clicked on. Or for a video subscription service we could have $n$ beta distributions over $n$ titles each of which represents the probability that this title will be watched to the final episode (title survival). How do we rank the $n$ titles (articles) from most watchable (readable) to least watchable? Let us abstractly view both problems as interchangeable where we are ranking items.

We have two items ($u$ and $v$), each with their own $\alpha,\beta$ parameters that define a beta distribution. Each beta distribution samples a coin with a probability $1-\theta$ of heads (retaining) and $\theta$ for tails (churning out). In the first time step, item $v$ retains less than $u$ if it has a higher coin flip probability, e.g. $p(\theta_v > \theta_u)>0.5 $ or the probability is larger than 50\%. In the case of integer parameters, this is given by the following integral:
\begin{equation*}
\begin{split}
p(\theta_v > \theta_u) & = \int_{\theta_u=0}^1 \int_{\theta_v=\theta_u}^1   \frac{1}{B(\alpha_u,\alpha_v)} \theta_u^{\alpha_u-1} (1-\theta_u)^{\beta_u-1}\cdot  \\ &  \frac{1}{B(\alpha_v,\alpha_v)} \theta_v^{\alpha_v-1} (1-\theta_v)^{\beta_v-1}  d\theta_u d\theta_v
\end{split}
\end{equation*}
which simplifies \cite{miller_2015} into
\begin{equation}\label{eq:ranking_proba_formula}
p(\theta_v > \theta_u) = \sum_{i=0}^{\alpha_v-1} \frac{B(\alpha_u+i,\beta_u+\beta_v) } { (\beta_v + i) B(1+i,\beta_v)B(\alpha_u,\beta_u) }.
\end{equation}

If the quantity is larger than 0.5, then item $u$ retains less than item $v$ in the first time step. It turns out, under thousands of simulations, the less likely survivor is also the one with the larger median $\theta$. Figure~\ref{fig:medianVprobability} shows a scatter plot as we randomly compare pairs of beta distributions. It is easy to see that the difference in their medians agrees with the probability $p(\theta_v > \theta_u)$. So, instead of using a complicated formula to test $p(\theta_v > \theta_u)$, we just need to compare the medians via the inverse incomplete beta function (\texttt{betaincinv}) denoted by $I^{-1}()$ and see if $I^{-1}(0.5,\alpha_u,\beta_u) > I^{-1}(0.5,\alpha_v,\beta_v)$. A proof of this is below.

\begin{figure}
  \includegraphics[width=\linewidth]{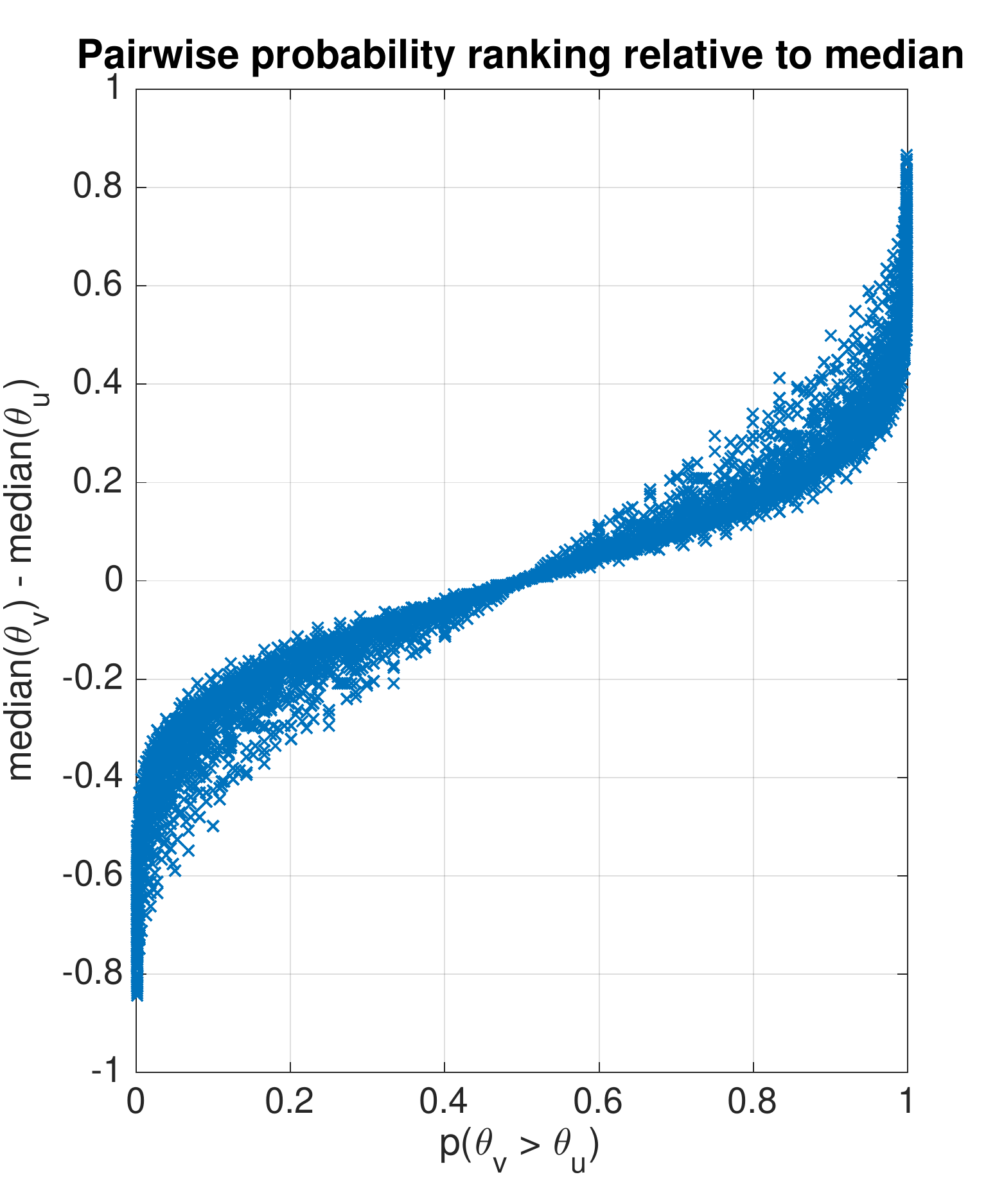}
  \caption{The medians of beta distributions are consistent with the pairwise probabilistic ranking of beta distributions.}
  \Description{medians are consistent with pairwise ranking.}
  \label{fig:medianVprobability}
\end{figure}

For later time steps, we will leverage a geometric assumption but applied to distributions rather than point estimates. The item which retains longer is the one with the lower product of repeated coin flip probabilities, i.e. $p(\theta_v^t > \theta_u^t)$. In that case, the beta distributions get modified by taking them to powers. Sometimes, the product of beta-distributed random variables is beta-distributed \cite{fan1991distribution} but other times it is {\em almost} beta distributed. In general, we can easily compute the mean and variance of this power-beta distribution to find a reasonable beta approximation to it with its own $\alpha,\beta$ parameters. This is done by leveraging the derivation of \cite{fan1991distribution} as follows. Assume we have a beta distribution $p(\theta|\alpha,\beta)$ and define the new random variable $z=(1-\theta)*\theta^{t-1}$. We can then derive the first moment as
\begin{equation*}
S=E_{p(z)}[z] = \left ( \frac{\beta}{\alpha+\beta} \right )\left ( \frac{\alpha}{\alpha+\beta} \right )^{t-1}
\end{equation*}
and the second moment as
\begin{equation*}
T=E_{p(z)}[z^2] = \left(\frac{\alpha(\alpha+1)}{(\alpha+\beta)(\alpha+\beta+1)} \right )\left(\frac{\beta(\beta+1)}{(\alpha+\beta)(\alpha+\beta+1)} \right )^{t-1}.
\end{equation*}
Then, we approximate the distribution for the future event probabilities $p(z)$ by a beta distribution where 
\begin{equation}
\begin{aligned}
\hat{\alpha} & = \frac{(S-T)S}{T-S^2}  \\
\hat{\beta} & = \frac{(S-T)(1-S)}{(T-S^2)}.
\label{eqn:evolution_of_beta_approximation}
\end{aligned}
\end{equation}
Therefore, in order to compare who is more likely to survive in future horizons, we can combine Equation~\ref{eqn:evolution_of_beta_approximation} and Equation~\ref{eqn:survival} to find the median of the approximated future survival distribution.

\begin{theorem}
\label{thm:maintheorem}
For random variables $\theta_u \sim {\rm Beta}(\alpha_u,\beta_u)$ and $\theta_v \sim {\rm Beta}(\alpha_v,\beta_v)$ for $\alpha_u,\alpha_v,\beta_u,\beta_v \in \mathbb{N}$,
$p(\theta_v > \theta_u) > 0.5$ if and only if $I^{-1}(0.5,\alpha_v,\beta_v) > I^{-1}(0.5,\alpha_u,\beta_u)$  (the median of $\theta_v$ is larger than the median of $\theta_u$). 
\end{theorem}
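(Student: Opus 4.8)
The plan is to track the sign of $p(\theta_v>\theta_u)-\tfrac12$ and show that it coincides with the sign of $m_v-m_u$, where $m_u=I^{-1}(0.5,\alpha_u,\beta_u)$ and $m_v=I^{-1}(0.5,\alpha_v,\beta_v)$ are the two medians. Write $F_u,F_v$ for the beta CDFs and $f_u,f_v$ for the densities. Since the variables are independent and continuous, $p(\theta_v>\theta_u)+p(\theta_u>\theta_v)=1$, and the statement is invariant under swapping $u\leftrightarrow v$; hence it suffices to establish the single implication $m_v>m_u\Rightarrow p(\theta_v>\theta_u)>\tfrac12$ together with the boundary identity $m_v=m_u\Rightarrow p(\theta_v>\theta_u)=\tfrac12$, from which the full biconditional follows by symmetry. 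The convenient representations I would use are
\begin{equation*}
p(\theta_v>\theta_u)-\tfrac12=-\int_0^1\big(F_v(x)-F_u(x)\big)f_u(x)\,dx=-\int_0^1\big(F_v(x)-F_u(x)\big)f_v(x)\,dx,
\end{equation*}
which follow from $\int_0^1 F_uf_u\,dx=\int_0^1 F_vf_v\,dx=\tfrac12$. Setting $D=F_v-F_u$, the entire question reduces to the sign of $\int_0^1 D\,f_u\,dx$.

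First I would dispose of the stochastic-dominance regime. The likelihood ratio $f_v(x)/f_u(x)\propto x^{\alpha_v-\alpha_u}(1-x)^{\beta_v-\beta_u}$ is monotone whenever $\alpha_v\ge\alpha_u$ and $\beta_v\le\beta_u$ (or the reverse); then one distribution stochastically dominates the other, so $D$ has a constant sign and both $m_v-m_u$ and $p(\theta_v>\theta_u)-\tfrac12$ inherit that sign at once. The substantive case is the heterogeneous regime $\alpha_v>\alpha_u,\ \beta_v>\beta_u$ (or both reversed). Here $\log\!\big(f_v/f_u\big)$ is a strictly concave (or convex) function of $x$, so $f_v=f_u$ has at most two roots; since $D(0)=D(1)=0$, the resulting sign pattern of $D'=f_v-f_u$ forces $D$ to vanish exactly once in the interior, at a single crossing point $c$, with $D>0$ on $(0,c)$ and $D<0$ on $(c,1)$ after relabeling.

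The single-crossing picture pins down the median ordering geometrically: the medians cannot straddle $c$ (if $m_u<c$ then $F_v(m_u)>F_u(m_u)=\tfrac12$, forcing $m_v<m_u$), so both medians lie on the same side of $c$, and $m_v>m_u$ holds exactly when $c<m_u$ (the side where $F_v<F_u$), with $c=m_u$ corresponding to the equal-median boundary. It then remains to show $\int_0^1 D\,f_u\,dx<0$ precisely when $c<m_u$. Splitting at $c$ gives $p(\theta_v>\theta_u)-\tfrac12=\int_c^1(F_u-F_v)f_u\,dx-\int_0^c(F_v-F_u)f_u\,dx$, a difference of two positive quantities; heuristically $c<m_u$ places more than half of the mass of $\theta_u$ on $(c,1)$, where the integrand favours $\theta_v$, biasing the difference positive.

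I expect this last step to be the main obstacle, and it is exactly where the beta structure must enter: single-crossing alone does not fix the sign (this is precisely why pairwise comparisons of arbitrary distributions are non-transitive), so the magnitude profile of $D$ matters, not merely its sign. To close it I would exploit the hypothesis $\alpha,\beta\in\mathbb N$ in one of two ways: either invoke the closed form in Equation~\ref{eq:ranking_proba_formula} together with the median equation $F(m)=\tfrac12$ and verify the sign change directly, or use the order-statistic representation of an integer-parameter beta (the $\alpha$-th order statistic of $\alpha+\beta-1$ i.i.d.\ uniforms), which turns both $p(\theta_v>\theta_u)$ and the median condition into finite binomial sums amenable to a combinatorial comparison; the same tools would also settle the boundary identity $m_v=m_u\Rightarrow p=\tfrac12$. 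The integrality assumption is doing genuine work here — it already forces the clean fact that a median equal to $\tfrac12$ occurs only when $\alpha=\beta$ — and I would rely on it to upgrade the qualitative single-crossing argument into an exact sign computation.
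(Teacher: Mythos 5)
There is a genuine gap: your argument stops exactly at the step that carries all the difficulty. The reductions you perform are correct and useful --- the identity $p(\theta_v>\theta_u)-\tfrac12=-\int_0^1(F_v-F_u)f_u\,dx$, the disposal of the monotone-likelihood-ratio regime by stochastic dominance, the single-crossing of $F_v-F_u$ from log-concavity of the likelihood ratio, and the observation that both medians must lie on the same side of the crossing point $c$ all check out. But in the substantive regime ($\alpha_v>\alpha_u$, $\beta_v>\beta_u$ or both reversed) you then need the sign of $\int_0^1 (F_v-F_u)f_u\,dx$ to agree with the side of $c$ on which the medians sit, and you explicitly do not prove this: you offer a heuristic (``more than half of the mass of $\theta_u$ lies where the integrand favours $\theta_v$'') and two candidate strategies (the closed form of Equation~\ref{eq:ranking_proba_formula}, or the order-statistic representation of integer-parameter betas) without executing either. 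As you yourself correctly note, single-crossing alone cannot fix this sign --- that is precisely the mechanism behind non-transitive examples --- so the unproven step is not a routine verification but the entire content of the theorem in the non-dominance case. A proposal that reduces the claim to ``an exact sign computation'' it does not perform is not a proof.

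For comparison, the paper proceeds differently: it first settles the two one-parameter-varying cases ($\alpha_u=\alpha_v$, and $\beta_u=\beta_v$) directly from the closed-form sum in Equation~\ref{eq:ranking_proba_formula}, by observing that $p(\theta_v>\theta_u)$ and $p(\theta_u>\theta_v)$ differ only in their denominators, and then handles the general case by interpolating through the intermediate distribution ${\rm Beta}(\alpha_u,\beta_v)$ and chaining the two settled comparisons via the total order on medians. Your MLR observation actually subsumes and cleanly re-proves the paper's two base cases (each is a stochastic-dominance comparison), but your route through the mixed case is a different and harder path than the paper's interpolation argument, and it is the part you have not closed. If you want to complete your approach, the second of your two suggested tools is the more promising: with $\alpha,\beta\in\mathbb N$ both $p(\theta_v>\theta_u)$ and $F(m)=\tfrac12$ become finite binomial sums, and the required sign comparison becomes a concrete combinatorial inequality rather than an appeal to intuition about mass placement.
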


\begin{proof}
We first prove that the median gives the correct winner under simplifying assumptions when both beta distributions have the same $\alpha$ or the same $\beta$.

First consider when the distributions have the same $\alpha_u=\alpha_v=\alpha$ and different $\beta_u$ and $\beta_v$. In that case, Equation~\ref{eq:ranking_proba_formula} simplifies to
\begin{equation}
p(\theta_v > \theta_u) = \sum_{i=0}^{\alpha_v-1} \frac{ B(\alpha_u+i,\beta_u+\beta_v) } { (\beta_v + i) B(1+i,\beta_v) B(\alpha,\beta_u) }.
\end{equation}
The formulas for $p(\theta_v > \theta_u)$ and $p(\theta_u > \theta_v) $ only differ in their denominators. Then, if $\beta_v > \beta_u$ it is easy to show that
\begin{equation}
    (\beta_v + i) B(1+i,\beta_v) B(\alpha,\beta_u) >  (\beta_u + i) B(1+i,\beta_u) B(\alpha,\beta_v).
\end{equation}
Therefore, $p(\theta_v > \theta_u) > p(\theta_u > \theta_v)$ if and only if $\beta_v > \beta_u$. 

Similarly, if $\beta_v > \beta_u$, the medians satisfy $I^{-1}(0.5,\alpha_v,\beta_v) < I^{-1}(0.5,\alpha_u,\beta_u)$. This is true since, if all else is equal, increasing the $\beta$ parameter reduces the median of a beta distribution. Therefore, for $\alpha_u=\alpha_v$, the median ordering is always consistent with the probability test.

An analogous derivation holds when the two distributions have the same $\beta_u=\beta_v$ and different $\alpha_u$ and $\alpha_v$. This is obtained by using the property $p(\theta|\alpha,\beta)=p(1-\theta|\beta,\alpha)$. Therefore, for $\beta_u=\beta_v$, the median ordering is always consistent with the probability test.

Next we generalize these two statements to show that the median ordering always agrees with the probability test. Consider the situation where $median(\alpha_u,\beta_u)>median(\alpha_u,\beta_v)>median(\alpha_v,\beta_v)$. Due to the scalar nature of the median of the beta distribution, we must have transitivity. We must also have $median(\alpha_u,\beta_u) < median(\alpha_v,\beta_u) < median(\alpha_v,\beta_v)$. Since each pair of inequalities on medians requires that the corresponding statement on the probability tests also holds, the overall statement $median(\alpha_u,\beta_u)<median(\alpha_v,\beta_v)$ must also imply that $p(\theta_v > \theta_u) > p(\theta_u > \theta_v)$.
\end{proof}

Therefore, thanks to Theorem~\ref{thm:maintheorem}, we can safely rank order beta distributions simply by considering their medians. These rankings are not only pairwise consistent but globally consistent. Recall that pairwise ranking of distributions does not always yield globally consistent rankings as  popularly highlighted through the study of nontransitive dice \cite{Savage1994}.  Thus, given any beta distribution at a particular horizon, it is straightforward to determine which items are most at risk through a simple sorting procedure on the medians. Given this approach to ranking beta distributions, we can show the performance of our model-based ranking of users or items by holding out data and evaluating the time to event in terms of the AUC (area under the curve) of the receiver operating characteristic (ROC) curve.

\section{Empirical Results}

\subsection{Synthetic simulations}
We present simulation results for the beta-logistic, and compare them to the logistic model.  We also show that the beta-logistic successfully recovers the posterior for skewed distributions.  In our first simulation we have 3 beta distributions which have very different shapes (see table \ref{table:simulation1} below), but with the same mean (this example is inspired by Fader and Hardie  \cite{fader2018project}).  Here, each simulated customer draws a coin from one of these distributions, and then flips that coin repeatedly until they have an event or we reach a censoring horizon (in this particular case we considered 4 decision points).
\begin{table}[h]
\begin{tabular}{ |p{2cm}||p{1cm}|p{1cm}|p{1cm}|  }
 \hline
 shape & $\alpha$ & $\beta$ & $\mu$\\
 \hline
 normal   & 4.75 & 14.25 &  0.25 \\
 right skewed &  0.5  & 1.50 & 0.25  \\
 u shaped & $0.08\bar{3}$ & 0.25 &  0.25 \\
 \hline  
\end{tabular}
 \caption{Heterogeneous beta distributions with identical means.}
  \label{table:simulation1}
\end{table}

It is trivial to show that the logistic model will do no better than random in this case, because it is not observing the dynamics of the survival distribution which reveal the differing levels of heterogeneity underlying the 3 populations.  If we allow the beta-logistic model to have a dummy variable for each of these cases then it can recover the posterior of each (see Figure~\ref{fig:simulation1_survival}).  This illustrates an important property of the beta-logistic: it  recovers posterior estimates even when the data is very heterogeneous and allows us to fit survival distributions well.

\begin{figure}[h]
  \centering
  \includegraphics[width=\linewidth]{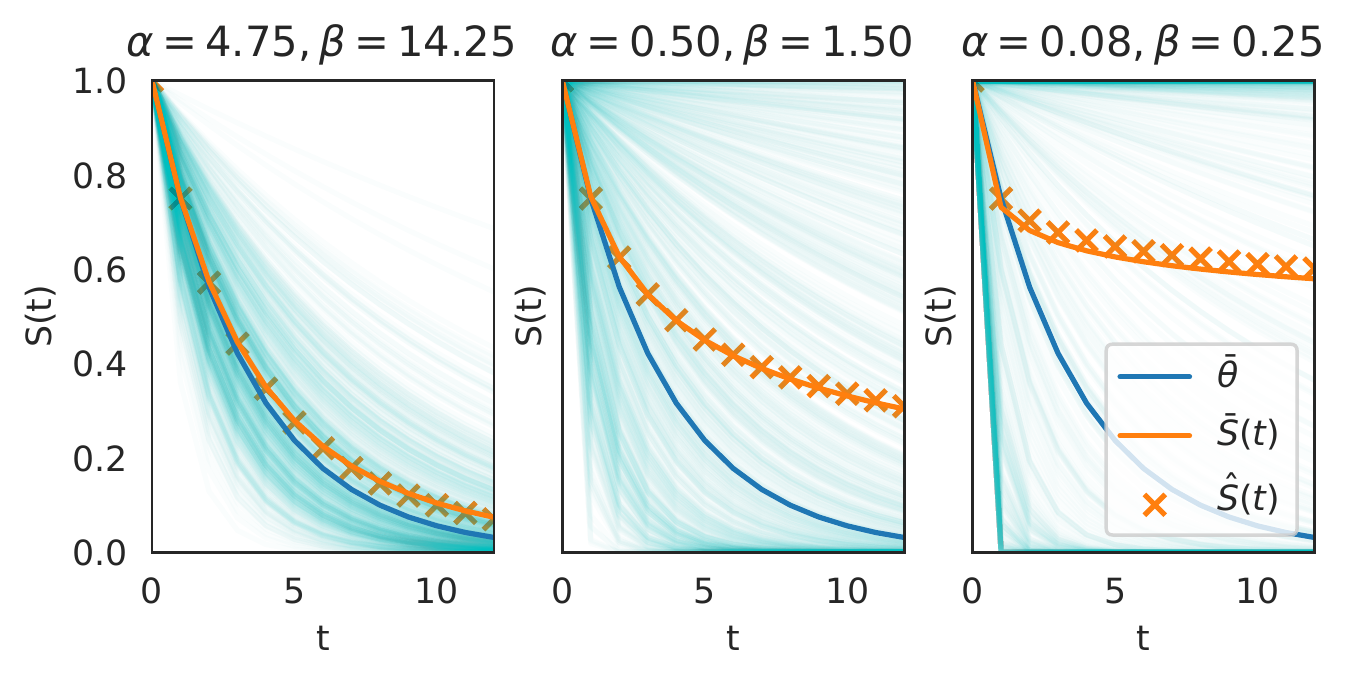}
  \caption{Survival distributions as a function of time as well as an estimate of $\hat{S}(t)$ from the beta-logistic. Using a point-estimate of the mean $\Bar{\theta}$ (as in the logistic model) fails to recover the heterogeneity.}
  \label{fig:simulation1_survival}
  \Description{Simulation 1}
\end{figure}

To create a slightly more realistic simulation, we can include another term which increases the homogeneity linearly in $\alpha$ and we add this as another covariate in the models. We also inject exponential noise into the $\alpha$ and $\beta$ used for our random draws. Now, the logistic model does do better than random when there is homogeneity present (see Figure~\ref{fig:simulation2}), however it still leaves signal behind by not considering the survival distribution. We additionally show results for a one time step beta logistic which performs similarly to the logistic model. However it seems to have a lower variance which perhaps indicates that its posterior estimates are more conservative, a property which will be confirmed in the next set of experiments.  

\begin{figure}[h]
  \centering
  \includegraphics[width=\linewidth]{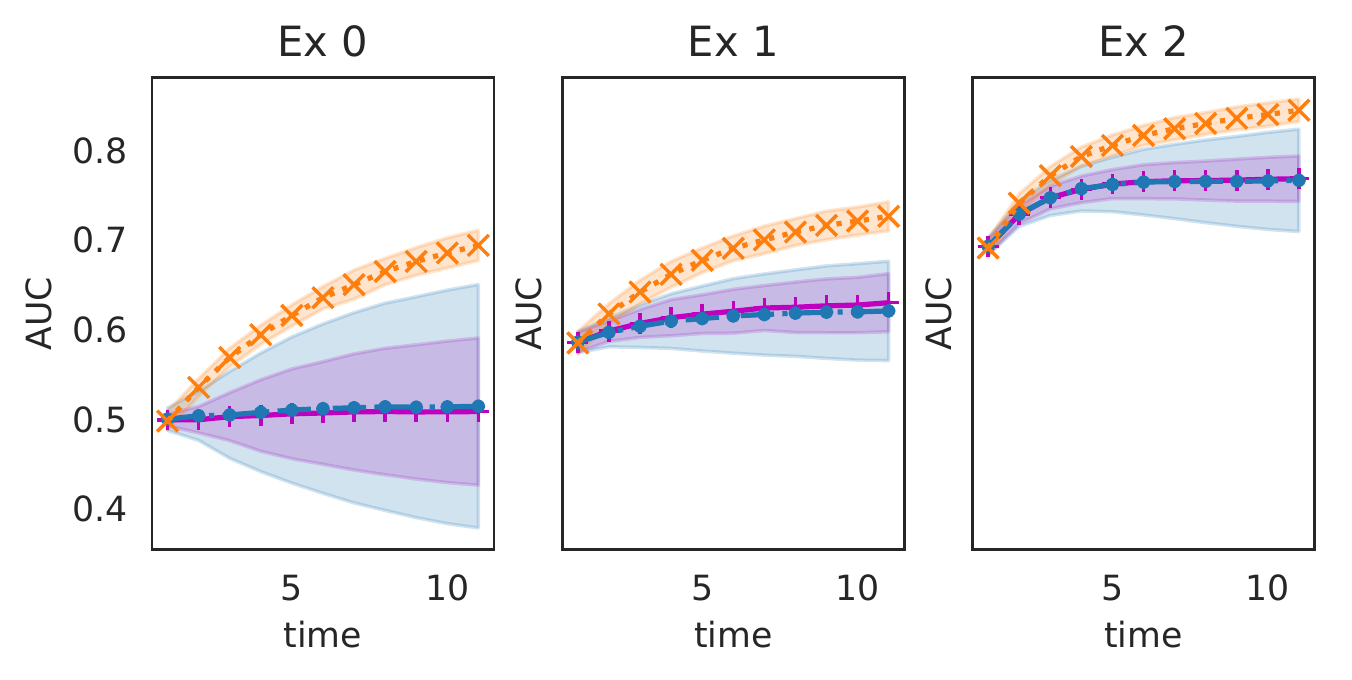}
  \caption{The level of heterogeneity increases from the left panel to the right panel as we add a linear term in $\alpha$. Clearly, the mean of the beta-logistic 1 step (magenta plus), and logistic (cyan dot) are nearly identical, but the beta-logistic (orange cross) considers more survival information and outperforms both even when there is considerable homogeneity.}
  \Description{Simulation 2}
  \label{fig:simulation2}
\end{figure}

\subsection{Online conversions dataset}
\subsubsection{Survival modeling}
We now evaluate the performance of the beta logistic model on a large-scale sparse dataset. We use the Criteo online conversions dataset published alongside \cite{chapelle2014modeling} and publicly available for download\footnote{http://labs.criteo.com/2013/12/conversion-logs-dataset/}. We consider the problem of modeling the distribution of the time between a click event and a conversion event. We will consider a censoring window of 12 hours (61\% of conversions happen within that window). As noted in \cite{chapelle2014modeling}, the exponential distribution fits reasonably well the data so we will compare the beta-logistic model against the exponential distribution (1 parameter) and the Weibull distribution (2 parameters). Since the temporal integration of the beta-logistic model is intrinsically discrete, we consider a time-discretization of 5 minute steps. We also add as baselines 2 logistic models: one trained at a horizon of 5 minutes (the shortest interval), and one trained at a horizon of 12 hours (the largest window). All conditional models are implemented as sparse linear models in Vectorflow \cite{rostykusvectorflow} and trained through stochastic gradient descent. All survival models use an exponential reparameterization of their parameters (since the beta, exponential, and Weibull distributions all require positivity in their parameters). Censored events are down-sampled by a factor of 10x. We use 1M rows for training and 1M (held-out in time) rows for evaluation. The total (covariate) dimensionality of the problem is 102K after one-hot-encoding. Note that covariates are sparse and the overall sparsity of the problem is over 99.98\%. Results are presented in Figure~\ref{fig:criteo_exp1}.

\begin{figure}[h]
  \centering
  \includegraphics[width=\linewidth]{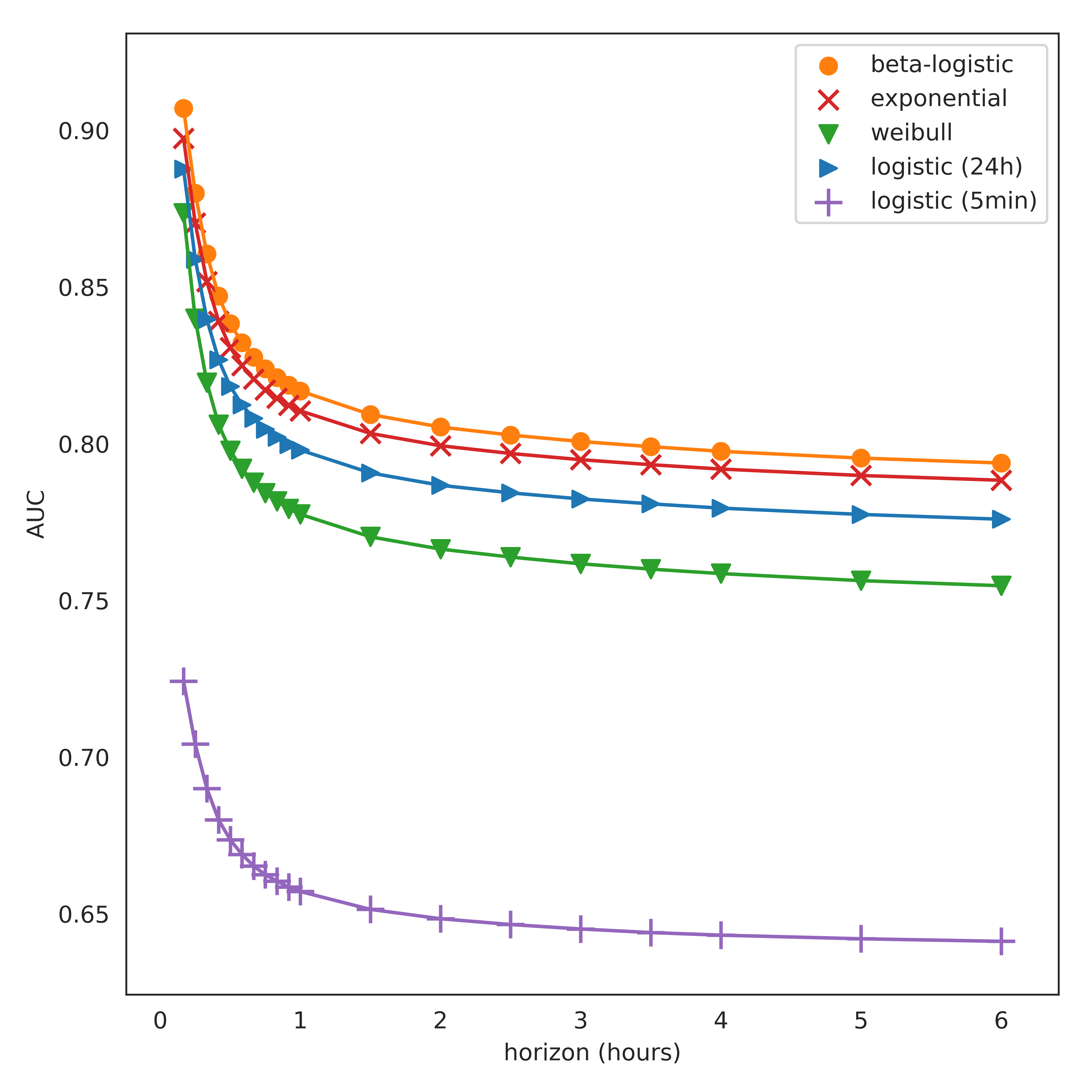}
  \caption{AUC as a function of censoring horizon for the various models considered.}
  \label{fig:criteo_exp1}
  \Description{Online conversion experiment 1}
\end{figure}

The beta-logistic survival model outperforms other baselines at all horizons considered.
Even though it is a 2-parameter distribution, the Weibull model is interestingly performing worse than the exponential survival model and the binary logistic classifier. We hypothesize that this is due to the poor conditioning of its loss function as well as the numerical instabilities during gradient and expectation computation (the latter requires function calls to the gamma function which is numerically difficult to estimate for moderately small values and for large values).

\subsubsection{Posterior size comparison}
We next consider the problem as a binary classification task (did a conversion happen within the specified time window?). It is interesting to compare the confidence interval sizes of various models. For the conditional beta-logistic model, the prediction variance on datapoint $x$ is given by:
\begin{equation*}\label{eq:beta_logistic_variance}
    \text{Var}(x) = \dfrac{\alpha(x)\beta(x)}{\left(\alpha(x)+\beta(x)\right)^2\left(\alpha(x)+\beta(x)+1\right)}.
\end{equation*}
For a logistic model parameterized by $\theta\in\mathbb{R}^d$, a standard way to estimate the confidence of a prediction is through the Laplace approximation of its posterior \cite{mackay2003information}. In the high-dimensional setting, estimating the Hessian or its inverse become impractical tasks (storage cost is $O(d^2)$ and matrix inversion requires $O(d^3)$ computation). In this scenario, it is customary to assume independence of the posterior coordinates and restrict the estimation to the diagonal of the Hessian $h = \dfrac{1}{\sigma^2} \in \mathbf{R}^d$, which reduces both storage and computation costs to $O(d)$. Hence under this assumption, for a given datapoint $x$ the distribution of possible values for the random variable $Y = \theta^T x$ is also Gaussian with parameters:
\begin{equation*}
    \mathcal{N}\left(\sum_i \theta_i x_i, \sum_i \sigma_{i}^2x_i^2 \right).
\end{equation*}
If the full Hessian inverse $H^{-1}$ is estimated, then $Y$ is Gaussian with parameters:
\begin{equation*}
    \mathcal{N}\left(\theta \cdot x, x^T \cdot H^{-1}x)\right).
\end{equation*}
When $Y$ is Gaussian, the logistic model prediction
\begin{equation*}
    \mathbb{P}(T=1 \vert x, \theta) = \dfrac{1}{1 + \exp(-Y)}
\end{equation*}
has a distribution for which the variance $v$ can be conveniently approximated. See \cite{li2012unbiased} for various suggested approximations schemes. We chose to apply the following approximation
\begin{equation*}
    v = \Phi\left(\dfrac{\pi\mu/\sqrt{8} -1}{\sqrt{\pi -1 + \pi^2\sigma^2/8}} \right) - \left(1 + \exp(-\mu/\sqrt{1 + \pi\sigma^2/8}) \right)^{-2}.
\end{equation*}

Armed with this estimate for the logistic regression posterior variance, we run the following experiment: we random-project (using Gaussian vectors) the original high-dimensional data into a 50-dimensional space, in which we train beta-logistic classifiers and logistic classifiers at various horizons, using 50k training samples every time. We then compare the average posterior size variance on a held-out dataset containing 50k samples. Holdout AUCs were comparable in this case for both models at all horizons. Two posterior approximations are reported for the logistic model: one using the full Hessian inverse and the other one using only the diagonalized Hessian. Results are reported in Figure~\ref{fig:criteo_exp2}.

\begin{figure}[h]
  \centering
  \includegraphics[width=\linewidth]{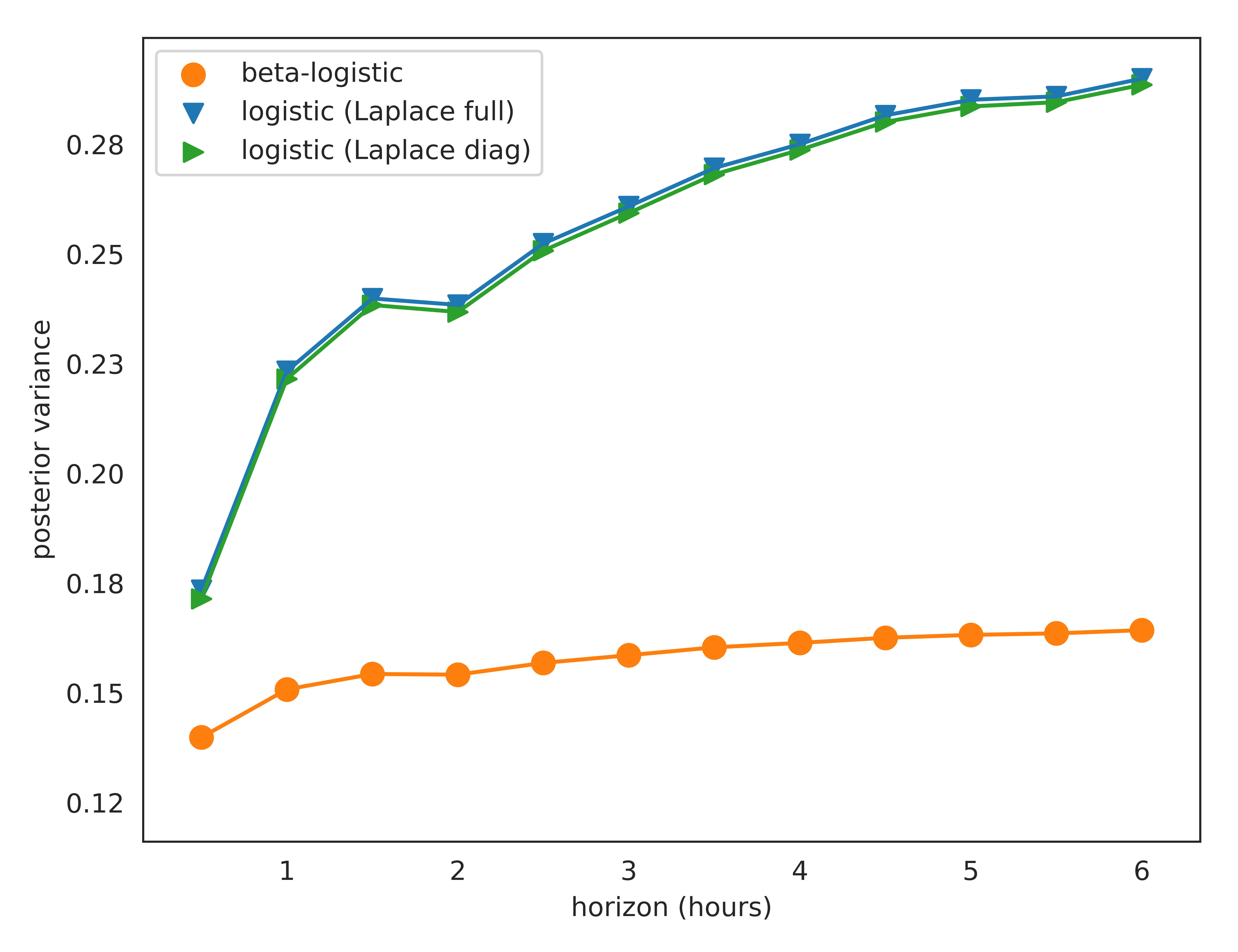}
  \caption{The posterior variance of beta-logistic binary classifiers as well as logistic regressions trained on binary labels datasets with increasing censoring windows.}
  \label{fig:criteo_exp2}
  \Description{Online conversion experiment 2}
\end{figure}

Note that the beta-logistic model produces much smaller uncertainty estimates (between 20\% and 45\% smaller) than the logistic model with Laplace approximation. Furthermore, the growth rate as a function of the horizon of the binary classifier is also smaller for the beta-logistic approach. Also note that the Laplace posterior with diagonal Hessian approximation underestimates the posterior obtained using the full Hessian. Gaussian posteriors are  obviously unable to appropriately model data skew. 

This empirical result is arguably clear evidence of the superiority of the posteriors generated by the beta-logistic model over a standard Laplace approximation estimate layered onto a logistic regression model posterior. The beta-logistic posterior is also much cheaper to recover in terms of computational and storage costs.
This also suggests that the beta-logistic model could be a viable alternative to standard techniques of explore-exploit models in binary classification settings.

\subsection{Video streaming subscription dataset}
\subsubsection{Retention modeling}
We now study the problem of modeling customer retention for a subscription business. We leverage a proprietary dataset from a popular video streaming service. In a subscription setting when a customer chooses not to renew the service, the outcome is explicitly observed and logged. From a practical perspective, it is obviously preferable and meaningful when a customer's tenure with the service is n months rather than 1 month. It is also clearly valuable to be able to estimate and project that tenure accurately across different cohorts of customers. In this particular case the cohorts are highly heterogeneous as shown in  Figure~\ref{fig:netflix_beta}.

\begin{figure}[h]
  \centering
  \includegraphics[width=\linewidth]{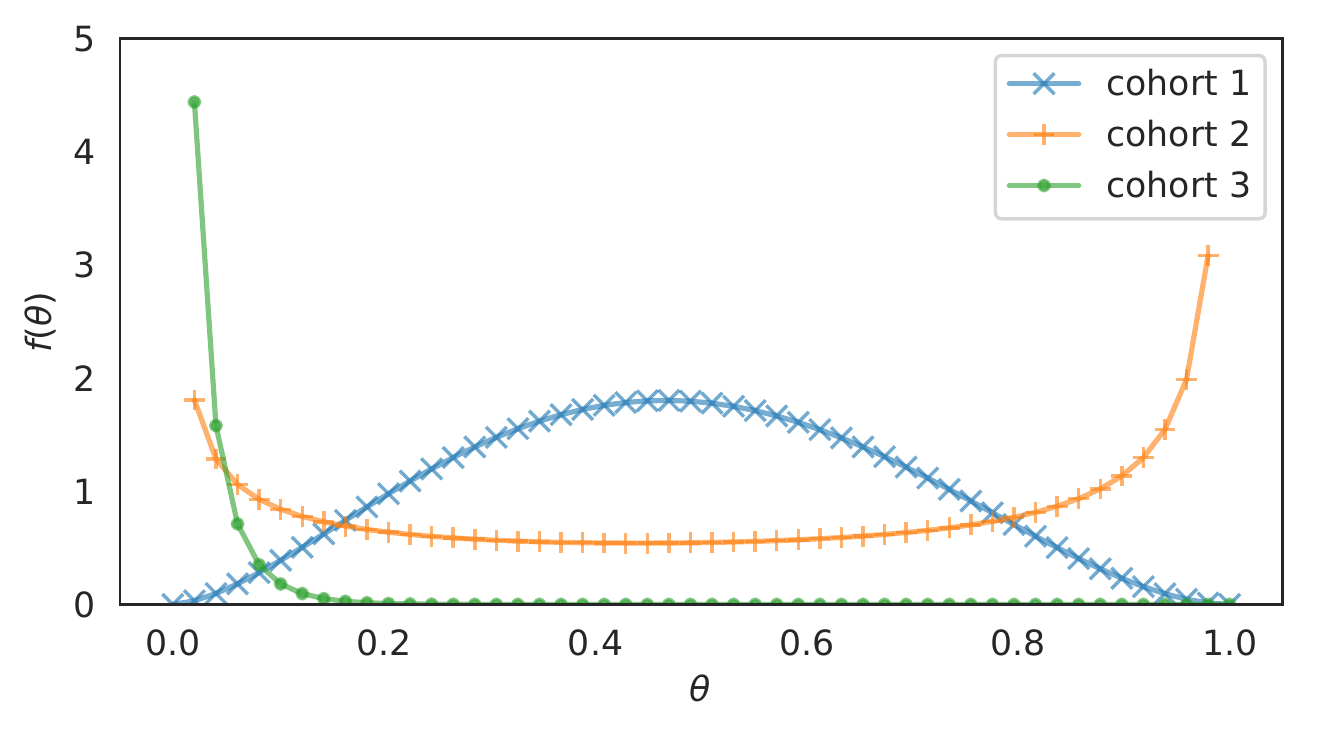}
  \caption{Estimated churn probabilities for 3 different cohorts. The large variations in the shape of the fitted distributions motivate the use of a beta prior on the conditional churn probability.}
  \Description{Subscription Retention Model AUC}
  \label{fig:netflix_beta}
\end{figure}

In this example the data set had more than 10M rows and 500 columns. We trained 20 models on bootstraps of the data with a 10x downsample on censored customers. We used 4 discrete decision points to fit the model. Evaluation was done on a subset of 3M rows which was held out from the models, and held out in time as well over an additional 5 decision points (9 total months of data). All models are implemented as GBRTs and estimated using lightGBM.  In Figure~\ref{fig:sub_auc}, we show the evaluation of the models across two cohorts:  one with relatively little data and covariates to describe the customers (which should clearly benefit from modeling the unobserved heterogeneity) and one with much richer data and covariates. Surprisingly even on the rich data set where one might argue there should be considerable homogeneity within any given subset of customers, we still find accuracy improvements by using the beta-logistic over the standard logistic model. This example illustrates how regardless of how many covariates are considered, there is still considerable heterogeneity.
\begin{figure}[h]
  \centering
  \includegraphics[width=\linewidth]{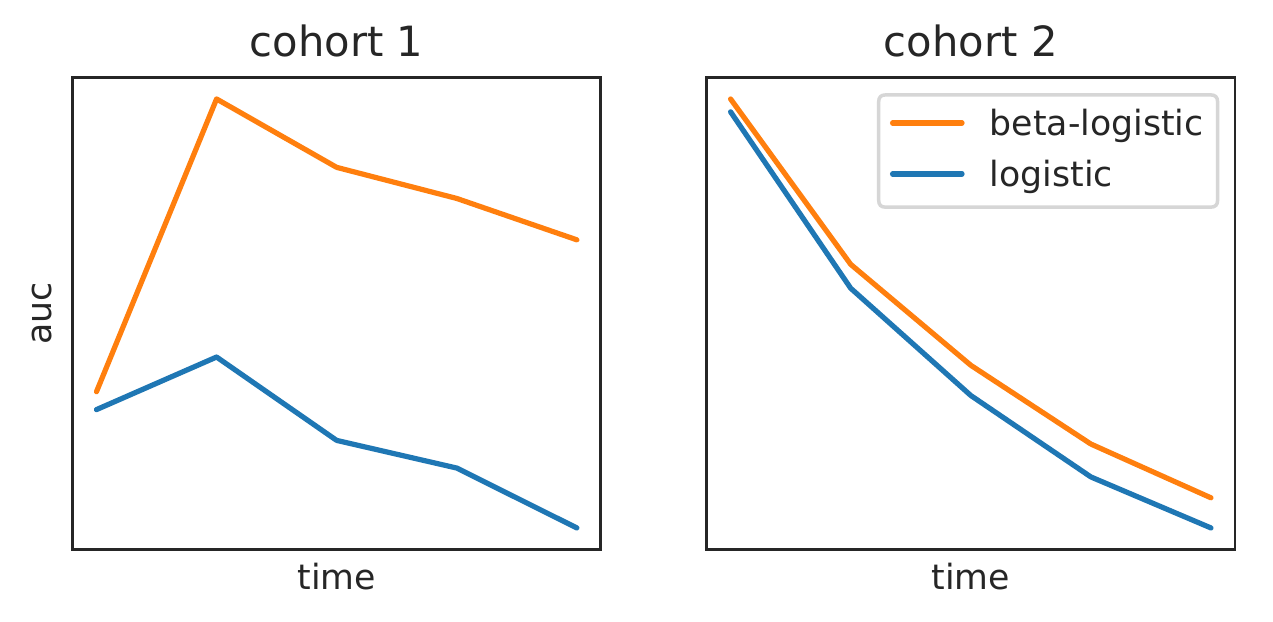}
  \caption{Held-out AUC for two different cohorts of customers.}
  \Description{AUC of the Subscription Retention Model.}
  \label{fig:sub_auc}
\end{figure}
\subsubsection{Retention within shows}
Another problem of importance to a video subscription business is ranking shows that customers are most likely to fully enjoy (i.e. customers watch the shows to completion across all the episodes).  Here we model the distribution of survival of watched episodes of a show conditional on the customer having started the show. 
In Figure~\ref{fig:netflix completion} we compare the performance of the beta-logistic to logistic models at an early (1 episode) horizon and a late horizon (8 episodes).  The dataset contains 2k shows and spans 3M rows and 500 columns. We used a 50/50 train/test split. Model training used bootstrap methods to provide uncertainty estimates.  All models are implemented as GBRTs which were estimated in lightGBM. In early horizons, the beta-logistic model once again provides significant AUC improvements over logistic models trained at either the 1 episode horizon and 8 episode horizon.

\begin{figure}[h]
  \centering
  \includegraphics[width=\linewidth]{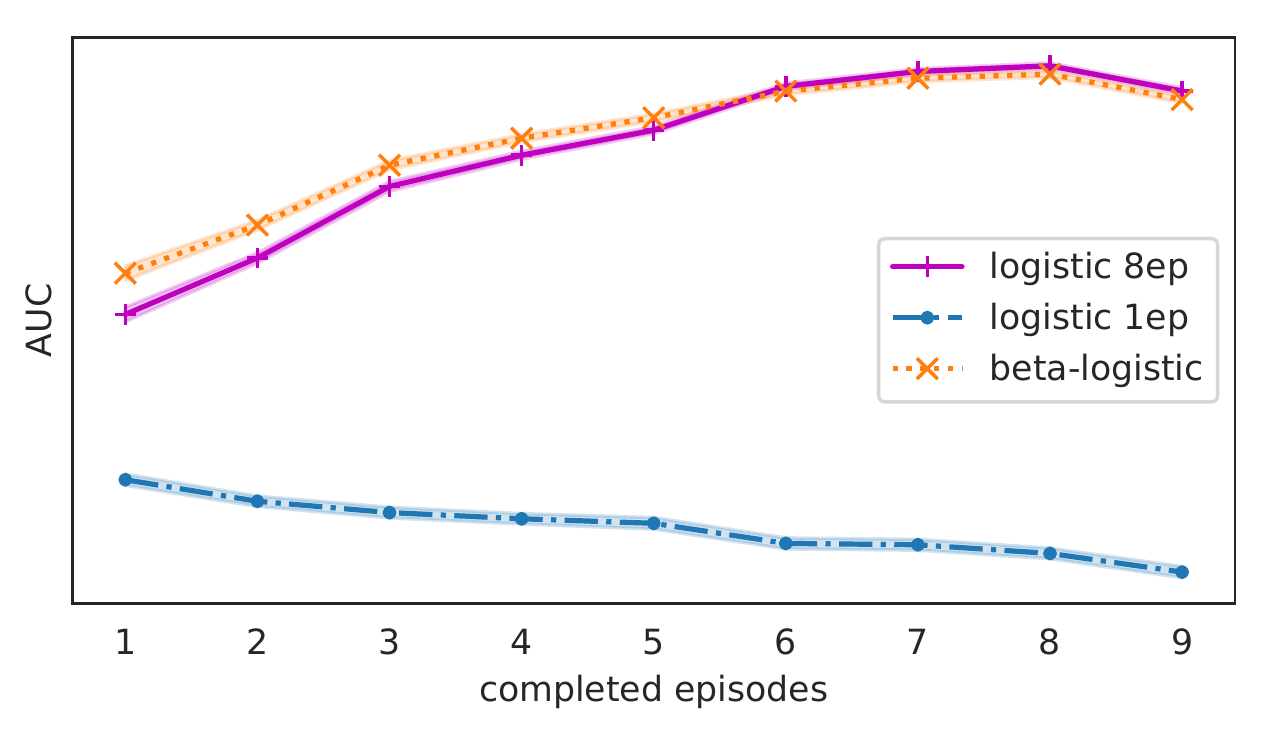}
  \caption{The beta-logistic improves ranking accuracy (in terms of AUC) for early horizons.}
  \label{fig:netflix completion}
\end{figure}

\section{Conclusion}
We noted that heterogeneity in the beta-logistic model can better capture the temporal effects of survival and ranking at multiple horizons. 
We extended the beta-logistic and its maximum likelihood estimation to linear, tree and neural models as well as characterized the convexity and properties of the learning problem. The resulting survival models give survival estimates and provably consistent rankings of who is most-at-risk at multiple time horizons. Empirical results demonstrate that the beta-logistic is an effective model in discrete time to event problems, and improves over common baselines. It seems that in practice regardless of how many attributes are considered there are still unobserved variations between individuals that influence the time to event. Further we demonstrated that we can recover posteriors effectively even when the data is very heterogeneous, and due to the speed and ease of implementation we argue that the beta-logistic is a baseline that should be considered in time to event problems in practice.

In future work, we plan to study the potential use of the beta-logistic in explore-exploit scenarios and as a viable option in reinforcement learning to model long-term consequences from near-term decisions and observations.    

%
\begin{acks}
The authors would like to thank Nikos Vlassis for his helpful comments during the development of this work, and Harald Steck for his thoughtful review and guidance.
\end{acks}

%
\bibliographystyle{ACM-Reference-Format}
\bibliography{sample-base}

%
\begin{appendix}

\section{Beta Logistic Formulas}

\subsection{Recurrence derivation}
\label{subsection:recurrence_derivation}
This derivation is taken from Fader and Hardie \cite{fader2007project} where they use it as a cohort model (also called the shifted beta geometric model) that is not conditional on a covariate vector $x$. 

We do not observe $\theta$, but its expectation given the beta prior (also called marginal likelihood) is given by:
\begin{equation*}
\begin{aligned}
\mathbb{P}(T=t|\alpha,\beta) & = \int_{0}^{1}\theta (1-\theta)^{t-1} \frac{\theta^{\alpha-1}(1-\theta)^{\beta-1}}{B(\alpha,\beta)}d\theta\\
& = \frac{B(\alpha+1,\beta+t-1)}{B(\alpha,\beta)}
\end{aligned}
\end{equation*}
We can write the above as:
\begin{equation*}
\mathbb{P}(T=t|\alpha,\beta) =\frac{\Gamma(\alpha+\beta)*\Gamma(\alpha+1)*\Gamma(\beta+ t - 1)}{\Gamma(\alpha)*\Gamma(\beta)*\Gamma(\alpha+\beta+t)}.   
\end{equation*}
Using the property $\Gamma(z+1) = z\Gamma(z)$ leads to equations (\ref{eq:recursion_prod_p}) and (\ref{eq:recursion_prod_s}), and at $t=1$ we have
\begin{equation*}
\begin{aligned}
\mathbb{P}(T=1|\alpha,\beta) & = \frac{\Gamma(\alpha+\beta)*\Gamma(\alpha+1)*\Gamma(\beta)}{\Gamma(\alpha)*\Gamma(\beta)*\Gamma(\alpha+\beta+1)}\\
\mathbb{P}(T=1|\alpha,\beta) & = \frac{\alpha}{\alpha+\beta}
\end{aligned}
\end{equation*}

\subsection{Gradients}
Note that for machine learning libraries that do not offer symbolic computation and auto-differentiation, taking the $-log$ of equations (\ref{eq:recursion_prod_p}) and (\ref{eq:recursion_prod_s}) and differentiating leads to the following recurrence formulas for the gradient of the loss function on a given data point with respect to the output parameters $a_i$ and $b_i$ of the model considered:
\label{subsection:appendix_gradient}
\begin{equation*}
\begin{aligned}
\frac{\partial \log(\mathbb{P}(T=1))}{\partial a_i} & =\frac{\partial a}{\partial a_i} \left(\frac{\beta}{\alpha+\beta} \right)\\
\frac{\partial \log(\mathbb{P}(T=1))}{\partial b_i} & =-\frac{\partial b}{\partial b_i} \left(\frac{\beta}{\alpha+\beta}\right)
\end{aligned}
\end{equation*}

These derivatives expand as follows:
\begin{equation*}
\begin{aligned}
\frac{\partial \log(\mathbb{P}(T=t))}{\partial a_i} & = \frac{\partial \log(\mathbb{P}(T=t-1))}{\partial a_i} - \frac{\partial a}{\partial a_i}\left( \frac{\alpha}{\alpha + \beta +t-1} \right) \\
\frac{\partial \log(\mathbb{P}(T=t))}{\partial b_i} & = \frac{\partial \log(\mathbb{P}(T=t-1))}{\partial b_i} \\
& + \frac{\partial b}{\partial b_i}\left(\frac{(\alpha+1)\beta}{(\beta+t-2)(\alpha + \beta+t-1)} \right)
\end{aligned}
\end{equation*}

We can get a similar recursion for the survival function:
\begin{equation*}
\begin{aligned}
\frac{\partial \log(\mathbb{P}(T > 1))}{\partial a_i} & =-\frac{\partial a}{\partial a_i}\left(\frac{\alpha}{\alpha + \beta}\right) \\
\frac{\partial \log(\mathbb{P}(T > 1))}{\partial b_i} & =\frac{\partial b}{\partial b_i}\left(\frac{\alpha}{\alpha + \beta}\right)
\end{aligned}
\end{equation*}
\begin{equation*}
\begin{aligned}
\frac{\partial \log(\mathbb{P}(T > t))}{\partial a_i} & =\frac{\partial \log(\mathbb{P}(T > t-1))}{\partial a_i} -\frac{\partial a}{\partial a_i}\left(\frac{\alpha}{\alpha + \beta+t-1}\right) \\
\frac{\partial \log(\mathbb{P}(T > t))}{\partial b_i} & = \frac{\partial \log(\mathbb{P}(T > t-1))}{\partial b_i} \\
& + \frac{\partial b}{\partial b_i}\left(\frac{\alpha\beta}{(\beta+t-1)(\alpha + \beta +t-1)} \right)
\end{aligned}
\end{equation*}

\subsection{Diagonal of the Hessian}
We obtain the second derivatives for the Hessian as follows:
\label{subsection:hessian}
\begin{equation*}
\begin{aligned}
\frac{\partial^2 \log(\mathbb{P}(T=1))}{\partial a_i^2} & = \frac{\partial^2 a}{\partial a_i^2}\left(\frac{\beta}{\alpha+\beta}\right)-\left(\frac{\partial a}{\partial a_i}\right)^2 \left(\frac{\alpha\beta}{(\alpha + \beta)^2} \right)\\
\frac{\partial^2 \log(\mathbb{P}(T=1))}{\partial b_i^2} & =-\frac{\partial^2 b}{\partial b_i^2}\left(\frac{\beta}{\alpha+\beta}\right)-\left(\frac{\partial b}{\partial b_i}\right)^2 \left(\frac{\alpha\beta}{(\alpha + \beta)^2} \right)
\end{aligned}
\end{equation*}
\begin{equation*}
\begin{aligned}
\frac{\partial^2 \log(\mathbb{P}(T=t))}{\partial a_i^2} & =\frac{\partial^2 \log(\mathbb{P}(T=t-1))}{\partial a_i^2} - \left(\frac{\partial^2 a}{\partial a_i^2}\right)\left( \frac{\alpha}{\alpha + \beta +t-1} \right) \\
& - \left(\frac{\partial a}{\partial a_i}\right)^2\alpha\left( \frac{\beta+t-1}{(\alpha + \beta +t-1)^2} \right) \\
\frac{\partial^2 \log(\mathbb{P}(T=t))}{\partial b_i^2} & = \frac{\partial^2 \log(\mathbb{P}(T=t-1))}{\partial b_i^2} \\
& + \left(\frac{\partial^2 b}{\partial b_i^2}\right)\left(\frac{(\alpha+1)\beta}{(\beta+t-2)(\alpha + \beta+t-1)} \right) \\
& + \left(\frac{\partial b}{\partial b_i}\right)^2\beta\left((\alpha+1)\frac{\beta^2 - (t-2)(\alpha + t - 1)}{\left(\beta+t-2\right)^2\left(\alpha+\beta+t-1\right)^2}  \right)
\end{aligned}
\end{equation*}

The survival counterparts to the above terms are also readily computed as follows:
\begin{equation*}
\begin{aligned}
\frac{\partial^2 \log(\mathbb{P}(T > 1))}{\partial a_i^2} & =- \frac{\partial^2 a}{\partial a_i^2}\left(\frac{\alpha}{\alpha+\beta}\right)-\left(\frac{\partial a}{\partial a_i}\right)^2 \left(\frac{\alpha\beta}{(\alpha + \beta)^2} \right)\\
\frac{\partial^2 \log(\mathbb{P}(T > 1))}{\partial b_i^2} & =\frac{\partial^2 b}{\partial b_i^2}\left(\frac{\alpha}{\alpha+\beta}\right)-\left(\frac{\partial b}{\partial b_i}\right)^2 \left(\frac{\alpha\beta}{(\alpha + \beta)^2} \right).
\end{aligned}
\end{equation*}
\begin{equation*}
\begin{aligned}
\frac{\partial^2 \log(\mathbb{P}(T > t))}{\partial a_i^2} & =\frac{\partial^2 \log(\mathbb{P}(T > t-1))}{\partial a_i^2} - \left(\frac{\partial^2 a}{\partial a_i^2}\right)\left( \frac{\alpha}{\alpha + \beta +t-1} \right) \\
& - \left(\frac{\partial a}{\partial a_i}\right)^2\alpha\left( \frac{\beta+t-1}{(\alpha + \beta +t-1)^2} \right) \\
\frac{\partial^2 \log(\mathbb{P}(T > t))}{\partial b_i^2} & = \frac{\partial^2 \log(\mathbb{P}(T > t-1))}{\partial b_i^2}  \\
& + \left(\frac{\partial^2 b}{\partial b_i^2}\right)\left(\frac{\alpha \beta}{(\beta+t-1)(\alpha+\beta+t-1)}\right) \\
& + \left(\frac{\partial b}{\partial b_i}\right)^2\beta\left(  \alpha\frac{\beta^2 - (t-1)(\alpha + t-1) }{\left(\beta+t-1\right)^2\left(\alpha+\beta+t-1\right)^2} \right)
\end{aligned}
\end{equation*}

\section{Alternative Derivation}
Another intuitive derivation of the single-step beta-logistic is obtained by starting from the likelihood for a logistic model and modeling the probabilities with a beta distribution:
\begin{equation*}
\begin{aligned}
    L & = \prod_{i} \mathbb{P}(y_i=1|\alpha,\beta)^{y_i} (1-\mathbb{P}(y_i=1|\alpha,\beta))^{y_i-1} \\
      & = \prod_{\forall y_i=1} \mathbb{P}(y_i=1| \alpha,\beta)\prod_{\forall y_i=0} (1-\mathbb{P}(y_i=1| \alpha,\beta)) \\
      & = \prod_{uncensored} \mathbb{P}(T=1| \alpha,\beta)\prod_{censored} \mathbb{P}( t >=1 | \alpha,\beta).
\end{aligned}
\end{equation*}
This is exactly the survival likelihood for a 1 step beta logistic model.
\end{appendix}
\clearpage
\onecolumn

\section{Reproducibility}

We include simple python implementations of the gradient callbacks that can be passed to XgBoost or lightGBM. Note that efficient implementations of these callbacks in C++ are possible and yield orders of magnitude speedups.  

\begin{lstlisting}[language=Python]
def grad_BL(alpha,beta,t,is_censored):
    """
    This function computes the gradient of the beta logistic objective.
    Since it is vectorized in practice for performance reasons, here we write
    the non-vectorized version for readability:
    """
    N = len(alpha)
    g = np.zeros((N, 2))
    for j in range(0,N):
        if (not is_censored[j]):
            #failed
            g[j,0] = beta[j]/(alpha[j] + beta[j])
            g[j,1] = -g[j,0]
            for i in range(2,int(t[j] + 1)):
                g[j,0] += -(alpha[j]/(alpha[j] + beta[j] + i - 1))
                g[j,1] += beta[j]/(beta[j] + i - 2) - beta[j]/(alpha[j] + beta[j] + i - 1)
        else:
            #survived
            g[j,:] = -alpha[j]/(beta[j] + alpha[j])
            g[j,1] = -g[j,0]
            for i in range(2,int(t[j] + 1)):
                g[j,0] += -(alpha[j]/(alpha[j] + beta[j] + i - 1))
                g[j,1] += beta[j]/(beta[j] + i - 1) - beta[j]/(alpha[j] + beta[j] + i - 1)
    return g

def hess_BL(alpha,beta,t,is_censored):
    """
    This function computes the diagonal of the Hessian of the beta logistic objective.
    """
    N = len(alpha)
    h = np.zeros((N,2))
    for j in range(0,N):
        h[j:] = -alpha[j]*beta[j]/((alpha[j] + beta[j])**2)
        if (not is_censored[j]):
            #failed
            for i in range(2,int(t[j] + 1)):
                h[j,0] += -alpha[j]*((beta[j] + i - 1)/(alpha[j] + beta[j] + i - 1)**2)
                d = (beta[j] + i - 2)**2)*(alpha[j] + beta[j] + i - 1)**2)
                h[j,1] += beta[j]*((alpha[j]+1)*(beta[j]**2-(i-2)*(alpha[j]+i-1)/d
        else:
            #survived
            for i in range(2,int(t[j] + 1)):
                h[j,0] += -alpha[j]*((beta[j] + i - 1)/(alpha[j] + beta[j] + i - 1)**2)
                d = (beta[j] + i - 2)**2)*(alpha[j] + beta[j] + i - 1)**2)
                h[j,1] += beta[j]*((alpha[j])*(beta[j]**2-(i-1)*(alpha[j]+i-1)/d

    h.shape = (N*2)
    return h
    
def likelihood_BL(alpha,beta,t,is_censored):
    """
    This function computes beta logistic objective (likelihood : higher = better)
    Since it is heavily vectorized in practice for performance reasons, we write 
    here the non-vectorized version for readability:
    """
    p = alpha / (alpha + beta)
    s = 1 - p
    for j in range(0,len(alpha)):
        for i in range(2,int(t[j]+1)):
            p[j] = p[j] * (beta[j] + i - 2)/(alpha[j] + beta[j] + i - 1)
            s[j] = s[j] - p[j]
    return p * (1.0 - is_censored) + s * is_censored


\end{lstlisting}

\end{document}